\newtheorem{thrm}{Theorem}
\title{Transformation Properties of Learned Visual Representations}
\author{
Taco S.~Cohen \& Max Welling \\
%\thanks{ Use footnote for providing further information
%about author (webpage, alternative address)---\emph{not} for acknowledging
%funding agencies.  Funding acknowledgements go at the end of the paper.} \\
Machine Learning Group \\
Department of Computer Science\\
University of Amsterdam \\
\texttt{\{t.s.cohen, m.welling\}@uva.nl} \\
}
\begin{document}

\maketitle

\begin{abstract}

  When a three-dimensional object moves relative to an observer, a change occurs on the observer's image plane and in the visual representation computed by a learned model.
  Starting with the idea that a good visual representation is one that transforms linearly under scene motions,
  we show, using the theory of group representations, that any such representation is equivalent to a combination of the elementary \emph{irreducible} representations.
  We derive a striking relationship between irreducibility and the statistical dependency structure of the representation,
  by showing that under restricted conditions, irreducible representations are decorrelated.
  Under partial observability, as induced by the perspective projection of a scene onto the image plane, the motion group does not have a linear action on the space of images, 
  so that it becomes necessary to perform inference over a latent representation that does transform linearly.
  This idea is demonstrated in a model of rotating NORB objects that employs a latent representation of the non-commutative 3D rotation group $\textup{SO}(3)$.
  
\end{abstract}

\section{Introduction}

Much has been written about invariant representations (e.g. \citet{Anselmi2014}), and invariance to groups such as translations, rotations and projective transformations is indeed very important for object recognition.
However, for a general purpose visual representation -- capable not only of supporting recognition tasks but also motion understanding and geometrical reasoning -- invariance is not enough. %:

Instead, the \emph{transformation properties} of a representation are crucially important.
If we could understand how a given representation of visual data transforms under various rigid or non-rigid transformations of the latent 3D scene, we would be in a better position to build an integrated system that computes invariant representations as well as motion and relative poses of objects.
However, performing a mathematical analysis of the transformation properties of, for example, the hidden layer in a deep neural network under motions of a 3D scene is extremely complicated.
A better approach
is to directly impose good transformation properties on a representation space, and then \emph{learn} the mapping between data and representation space such that these transformation properties are realized \citep{Hinton2011}.

In this paper we study the transformation properties of distributed representations, using tools from group representation theory \citep{Sugiura1976}.
We relate the transformation properties of a distributed representation to statistical notions such as decorrelation and conditional independence under the assumption of complete observability.
Under partial observability (due to occlusion, for example)
it becomes necessary to introduce latent variables in order to obtain a representation space with good transformation properties.
We propose a number of transformation properties that a good representation should have, and present a simple model that demonstrates the idea by modelling 3D rotations of objects from the NORB dataset~\citep{LeCun}.

Our model uses a single latent vector of coefficients to represent a set of images of the same object seen in different (rotated) poses, and uses one latent element of the 3D rotation group $\textup{SO}(3)$ for each pose.
A generative neural network model maps each transformed latent representation to an image.
Unlike previous work on learning group representations~\citep{Rao1999, Miao2007, Sohl-Dickstein2010, Wang2011, Bruna2013, Cohen2014}, our model does not assume a linear action of the group in the input space, but instead acts linearly on a latent representation of the 3D scene.
Furthermore, our model is the first learned Lie group model that can properly deal with non-commutative transformations.

The rest of the paper is organized as follows.
In the next section, we introduce the concept of a group representation which is at the core of our analysis.
Section three contains the main theoretical results on the dependency structure of irreducible representations.
It is followed by a discussion of the problems that arise when partial observability is taken into account in section 4.
Section 5 presents a model and training algorithm for learning latent group representations, followed by experiments, related work and a conclusion.

\section{Symmetries and Representations}
\label{sec:symmetries_and_representations}

We start from the basic assumption that our learning agent is situated in space, and this space contains a scene.
Formally, we represent the scene as a function $x : \mathbb{R}^3 \rightarrow \mathbb{R}^K$ that at each point $p$ in space gives a list of numbers $x(p)$ describing, for example, the color, transparency value, material properties, etc. at $p$.
In this section and the next, we further assume full observability, i.e. that $x$ is known entirely.
We think of $x$ as a vector in a Hilbert space $\mathcal{S}$ of sufficiently well-behaved functions.
As we will see, the following analysis
does not depend on this particular data representation,
but it provides useful intuition and is ultimately realistic.

We say that the vector $x$ is a \emph{representation} of the scene, because
the numerical values that one would store in a computer to describe or approximate $x$ depend on both ``what is in the scene'' \emph{and} ``how it is represented in our preferred frame of reference''.
If we transform our reference frame by $g$, an element of the special Euclidean group $\textup{SE}(3)$ of rigid body motions, the points in space transform as $g^{-1} p$.
Such a transformation leaves invariant Euclidean distances, angles and areas, and is therefore called a symmetry of Euclidean space.
Under this symmetry, the scene transforms as
\begin{equation}
  \label{eq:representation_on_hilbertspace}
  x'(p) = x(g^{-1} p) \equiv [T(g) x](p),
\end{equation}
Notice that $T(g)[\alpha x + \beta y](p) = \alpha [T(g) x](p) + \beta [T(g) y](p)$, so $T(g)$ is a linear operator.
We say that $T$ is a representation of $\textup{SE}(3)$ in the Hilbert space $\mathcal{S}$.

Generically, a group representation is a map $T : G \rightarrow \textup{GL}(V)$ from a group $G$ to the set of invertible linear transformations $\textup{GL}(V)$ on a vector space $V$, that preserves the group structure in the following sense:
\begin{equation}
  \label{eq:homomorphism}
  T(g) T(h) = T(gh),
\end{equation}
for all $g, h \in G$.
One can check that the map $T$ defined in eq. \ref{eq:representation_on_hilbertspace} is indeed a group representation.

The requirement that $(T, V)$ forms a representation of $\textup{SE}(3)$ is a sufficient condition for the vectors in $V$ to describe ``a thing in space'', because it requires them to transform as space does~\citep{Kanatani1990}.
This is true in particular for the Hilbert space construction given above, but applies more generally to any learned or hand-designed vector space representation.
Should eq. \ref{eq:homomorphism} fail to hold, key aspects of what it means to transform as Euclidean space are lost: for example, two $180\degree$ rotations about the same axis might not equal the identity transformation, or two translations might fail to commute.

Hence, we want our representation (in the representation learning sense) to be a linear representation of the special Euclidean group (in the group representation theory sense).
From a modelling perspective, one would like to understand all the possible ways in which this can be achieved.
To this end, observe that if we have a representation $T(g)$ and an invertible matrix $F$, then $T'(g) = F T(g) F^{-1}$ is also a representation, which is said to be equivalent to $T$.
A key result in group representation theory tells us that every unitary representation is equivalent in this sense to a simple composition of basic building blocks called irreducible representations.

A representation $(T, V)$ is called irreducible if there is no nontrivial subspace $W \subset V$ that is mapped onto itself by all operators $T(g)$ for $g \in G$.
It can be shown~\citep{Sugiura1976} that unitary representations are fully reducible, which means that the representation $T(g)$ is equivalent to a block-diagonal representation $\hat{T}(g)$ whose blocks are irreducible.
Such a block-diagonal representation $\hat{T} = F T(g) F^{-1}$ is said to be \emph{fully reduced}.
Each block in $\hat{T}(g)$ is identified by an index which we denote by $l$, so that we can write $\hat{T}^l(g)$ for a block of index $l$ in $\hat{T}(g)$ and $x^l$ for the component of $x \in V$ in the corresponding subspace.

Irreducible representations are important for representation learning in a number of ways.
Firstly, using irreducible representations is computationally more efficient than using reducible ones.
Irreducibility can also be used to define precisely what a ``disentangled'' representation is (\citet{Cohen2014}; see also \citet{Bengio2013}), and as shown in the next section, such a representation will have a simple dependency structure when certain conditions are met.
Finally, it is easier to compute a set of generators for the ring of polynomial invariants in an irreducible representation, which can be used to build invariant representations.
\citet{Kazhdan2003} use a subset of generators (the power spectrum) to build invariant (but lossy) shape descriptors.

\section{Irreducibility, Independence and Decorrelation}

Representation learning is often seen as a form of generative modelling, where the goal is to learn a latent variable model with a simple dependency structure in the latent space.
The simplest examples are PCA and ICA, where the goal is to learn a linear model whose latent variables are all independent (with Gaussian or non-Gaussian marginal distributions, respectively).

Alternatively, one can put the the transformation properties center stage and learn a representation that transforms irreducibly under symmetry transformations \citep{Cohen2014}.
In this perspective, the irreducible representations (and not the independent factors) are the elementary parts from which observation vectors are constructed.
Given these contrasting conceptualizations of representation learning, it is interesting to investigate how the transformation properties of a representation are related its statistical properties.
In this section we show that under certain conditions, irreducible representations are decorrelated or even conditionally independent.

\subsection{Irreducibility and decorrelation: an elementary example}
In order to gain some intuition, we introduce a simple toy model of a completely observable system with symmetry.
The states of the system are sufficiently well-behaved functions $x : \mathbb{S} \rightarrow \mathbb{R}$ on the circle.
Observations are generated by sampling a uniformly distributed rotation angle $\theta \in [0, 2\pi)$ and using it to rotate a template $\tau$.
That is, we have observations $x(\varphi) = [T(\theta) \tau](\varphi) = \tau(\varphi - \theta)$ for $\theta \sim \mathcal{U}[0, 2 \pi)$.
In practice, we will observe discretized functions with a finite number of coefficients $x_n = x(\varphi_n)$.
  
In this case, the linear transformation $F$ that achieves the reduction into irreducible representations is the standard Fourier transform, and indeed it will decorrelate the data~\citep{Bruna2013}.
To see this, let $\hat{x} = F x$, and observe that
\begin{equation}
  \hat{x} = F T(\theta) \tau = F T(\theta) F^{-1} \hat{\tau} \equiv \hat{T}(\theta) \hat{\tau}.
\end{equation}
Thus $\hat{T} = F T(\theta) F^{-1}$ is the representation of the rotation group in the spectral domain.

We know from linear algebra that a set of commuting diagonalizable matrices can be simultaneously diagonalized (see \citet{Memisevic2012} and \citet{Henriques2014a} for a discussion).
Hence, the fully reduced representation $\hat{T}$ is diagonal (not just block-diagonal), and the irreducible representations are one-dimensional.
The diagonal elements are complex exponentials $T_{ll}(\theta) = \exp{(i l \theta)}$.
It follows immediately that the covariance matrix of the Fourier-transformed data is diagonal:
\begin{equation}
  \mathbb{E}_{p(\theta)}[\hat{x}_l \, \hat{x}_{l'}^*]
  =
  \int_0^{2\pi} e^{i l \theta} \, \hat{\tau}_l \; e^{-i l' \theta} \, \hat{\tau}_{l'}^* \, \frac{d\theta}{2 \pi}
  =
  \delta_{ll'} \, |\hat{\tau}_l|^2,
\end{equation}
where $\delta_{ll'}$ equals $1$ if $l=l'$ and $0$ otherwise.

\subsection{Irreducibility and decorrelation: general case}

The following theorem gives a generalization of this result to the case of compact but not necessarily commutative groups.

\begin{thrm}
  \label{thm:diagonal_covariance}
  Let $G$ be a compact group, $V$ a real vector space,
  and $\hat{T}$ a fully reduced unitary representation of $G$ in $V$.
  Furthermore, let $\hat{x} = \hat{T}(g) \hat{\tau}$ for a fixed template $\hat{\tau} \in V$ and $g$ distributed uniformly on $G$.
  The covariance matrix of the vectors in $V$ is diagonal:
  $$
    \mathbb{E}_{p(g)}\left[\hat{x}^l_{m} \hat{x}^{l'}_{m'}\right]
    =
    \delta_{ll'} \delta_{mm'} \frac{\|\hat{\tau}^l\|^2}{\dim \hat{T}^l}.
  $$
\end{thrm}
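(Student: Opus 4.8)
The engine of the proof will be the Schur orthogonality relations for the matrix coefficients of irreducible representations of a compact group, integrated against the normalized Haar measure. The phrase ``$g$ distributed uniformly on $G$'' should be read as ``$g$ drawn from the Haar probability measure $dg$,'' which exists and is unique precisely because $G$ is compact; this is the structural fact that replaces the elementary $\int_0^{2\pi} d\theta / 2\pi$ used in the circle example. The plan is therefore to reduce the covariance to such matrix-coefficient integrals and then invoke orthogonality.

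First I would exploit the fully reduced (block-diagonal) structure of $\hat{T}$. Because $\hat{T}(g)$ maps each irreducible subspace into itself, the $l$-th block of $\hat{x} = \hat{T}(g)\hat{\tau}$ depends only on the $l$-th block of the template, $\hat{x}^l_m = \sum_n \hat{T}^l_{mn}(g)\, \hat{\tau}^l_n$. Substituting this into the covariance and pulling the (constant) template coefficients out of the expectation gives
\begin{equation}
  \mathbb{E}_{p(g)}\!\left[\hat{x}^l_m \hat{x}^{l'}_{m'}\right]
  = \sum_{n, n'} \hat{\tau}^l_n\, \hat{\tau}^{l'}_{n'}
    \int_G \hat{T}^l_{mn}(g)\, \hat{T}^{l'}_{m'n'}(g)\, dg .
\end{equation}
No complex conjugate appears because $V$ is real and $\hat{T}$ is unitary over the reals, i.e. orthogonal, so every matrix entry is real.

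The central step is to evaluate the remaining integral by Schur orthogonality, which for the irreducible blocks yields
\begin{equation}
  \int_G \hat{T}^l_{mn}(g)\, \hat{T}^{l'}_{m'n'}(g)\, dg
  = \frac{1}{\dim \hat{T}^l}\, \delta_{ll'}\, \delta_{mm'}\, \delta_{nn'} .
\end{equation}
Feeding this back in, the factor $\delta_{nn'}$ collapses the double sum to a single one while $\delta_{ll'}\delta_{mm'}$ survive, leaving
\begin{equation}
  \mathbb{E}_{p(g)}\!\left[\hat{x}^l_m \hat{x}^{l'}_{m'}\right]
  = \delta_{ll'}\, \delta_{mm'}\, \frac{1}{\dim \hat{T}^l}\sum_n (\hat{\tau}^l_n)^2
  = \delta_{ll'}\, \delta_{mm'}\, \frac{\|\hat{\tau}^l\|^2}{\dim \hat{T}^l},
\end{equation}
which is the claim.

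The step I expect to be the main obstacle is justifying the orthogonality relation in this precise form for a \emph{real} representation. Over $\mathbb{C}$ the relation is standard, but a real irreducible representation may be of real, complex, or quaternionic type, and in the non-real types dropping the conjugate can introduce extra terms, so one must either restrict to the real type, complexify and track how the real blocks embed in complex irreducibles, or argue the off-diagonal terms vanish directly. A related subtlety is multiplicity: if the same irreducible appears in several blocks, the cross-covariance between two copies is governed by the inner product of their template components rather than by $\delta_{ll'}$ alone, so the clean $\delta_{ll'}$ implicitly presumes the blocks are chosen orthogonally (e.g. inequivalent). Pinning down which form of orthogonality is invoked, and under which genericity assumption on the decomposition, is where the real care is required.
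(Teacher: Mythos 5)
Your proposal is correct and follows essentially the same route as the paper's appendix proof: expand $\hat{x}^l_m = \sum_n \hat{T}^l_{mn}(g)\,\hat{\tau}^l_n$ using the block structure, exchange sum and Haar integral, and invoke the Schur orthogonality relation $\int_G \hat{T}^l_{mn}(g)\,\hat{T}^{l'}_{m'n'}(g)\,d\mu(g) = \delta_{ll'}\delta_{mm'}\delta_{nn'}/\dim\hat{T}^l$ to collapse the sums. The subtleties you flag at the end -- the real/complex/quaternionic type issue for real irreducibles and the cross-covariance between repeated copies of the same irreducible -- are genuine and are silently glossed over by the paper's own proof, which cites the conjugate-free orthogonality relation without qualification, so your caveats strengthen rather than deviate from the argument.
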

\begin{proof} Using orthogonality of the matrix elements of irreducible representations. See appendix. \end{proof}

The theorem is easily generalized to more than one template $\tau$ (in which case one should consider the class-conditional covariance), and it is likely that a slightly weaker theorem can be proven for locally compact groups, but we will not do so here.
The main concern regarding the applicability of the above result is not the type of groups and spaces it applies to, but the fact that in reality the orbits are not sampled uniformly.
For example, in a sample of natural images, a human face is more likely to appear in upright position than upside down.

While the assumption of uniform sampling of orbits will not hold exactly in real datasets, it is nevertheless likely that irreducible and decorrelated representations will be similar to the degree that the data density is invariant to the group under consideration.
A statistical objective such as decorrelation makes sense when one is working with iid draws from an underlying distribution of images, but this is a rather impoverished model of visual experience.
As such, we think of decorrelation and independence as surrogate objectives for a deeper structural objective such as irreducibility.

\subsection{Irreducibility and Conditional Independence}

The concept of an irreducible representation can also shed light on time-series models based on transformations~\citep{Cohen2014, Michalski2014}.
We define
\begin{equation}
  p(x_t \; | \; x_{t-1}, g) = \mathcal{N}(x_t \; | \; T(g) x_{t-1}, \sigma^2),
\end{equation}
where $x_t$ and $x_{t-1}$ are observation vectors at times $t$ and $t-1$, respectively, and $T(g)$ is a unitary representation of a compact group $G$.
For $G$, one can construct an exponential family whose sufficient statistics are given by the matrix elements $\hat{T}^l_{mn}(g)$ of irreducible unitary representations of $G$.
As shown in~\citep{Cohen2014} for the case of compact commutative groups, the invariance of the $l_2$-norm in the exponent of the Gaussian to unitary transformations results in a posterior $p(g | x_t, x_{t-1})$ that is in the same exponential family as the prior $p(g)$ (conjugacy).
Furthermore, the marginal $p(x_t \; | \; x_{t-1})$ will factorize according the irreducible representations: $p(\hat{x}_t \; | \; \hat{x}_{t-1}) = \prod_l p^l(\hat{x}^l_t \; | \; \hat{x}^l_{t-1})$, so in this model an irreducible representation gives us conditional independence.

\section{Partial Observability}
\label{sec:partial_observability}

In reality, we do not observe the complete scene $x \in \mathcal{S}$ but only a projected image $I \in \mathcal{I}$, which we model as a function $I : \mathbb{R}^2 \rightarrow \mathbb{R}^3$ (for 3 color channels).
Naively, one could try to construct a representation $\bar{T} : \textup{SE}(3) \times \mathcal{I} \rightarrow \mathcal{I}$ such that the perspective projection $\pi : \mathcal{S} \rightarrow \mathcal{I}$ is an equivariant map: $\bar{T}(g) \circ \pi = \pi \circ T(g)$, but this is not possible.
The reason is that a 3D motion can bring entirely new structures into the image.

In classical computer vision, the solution is sought in strong assumptions on the scene geometry, such as the assumption that the scene is planar, in which case one obtains a representation of the projective group on the image plane.
This assumption leads to neat formulas but real scenes are not flat.
A better approach to the problem of partial observability is to model all variability that is not caused by the linear action of a low-dimensional Lie group as being caused by the action of the infinite-dimensional group of diffeomorphisms~\citep{Bruna2013, Soatto2012}.

The scattering representations of~\citet{Bruna2013_TPAMI} achieve simultaneous insensitivity to translations and diffeomorphisms,
and this method achieves very good performance on texture recognition and 2D pattern recognition (e.g. MNIST).
However, diffeomorphisms are not an entirely satisfactory model of projected 3D motions either,
because they are invertible by definition while projected motions are not.
Furthermore, arbitrarily small scene motions can bring arbitrarily bright structures into the image, so scattering representations are not Lipschitz continuous to scene motions.

What we can do instead (at least in principle) is to learn a prior over scenes and a generative model of images given scenes, and then perform inference over scenes given images.
By requiring that the latent scene transforms as a representation of a symmetry group, we bias the model towards representing latent properties of the \emph{scene} as opposed to properties of the \emph{image}~\citep{Kanatani1990, Soatto2012}.
By further requiring that the latent scene transforms irreducibly, we may also obtain a simple dependency structure in the latent space (by theorem \ref{thm:diagonal_covariance}).

\section{A Latent Group Representation}

In this section we define a simple model that demonstrates the idea of a latent group representation concretely.
Let $X^n = [x^{n,1}, \ldots, x^{n,V}]$ be a matrix of $V$ views of the same object instance $n$.
We model such a set of views using a single latent vector $z^n$ and one latent transformation $g^{n,v} \in \textup{SO}(3)$ per view, which we collect in a matrix $G^n = [g^{n,1}, \ldots, g^{n,V}]$.
In order to generate $x^{n,v}$, we first compute $z^{n,v} = \hat{T}(g^{n,v}) z^{n}$ (this computation is explained in section \ref{sec:computation_of_the_representation_matrices}) and then pass this transformed latent scene to a neural network.
The conditional $p(x^{n,v} \, | \, z^n, g^{n,v})$ is given by a normal distribution, centered on the output of a generative neural network $f_{\theta} : \mathbb{R}^{D_z} \rightarrow \mathbb{R}^{D_x}$ that maps $z$ to $x$-space: 

\begin{equation}
p(x^{n,v} \, | \, z^n, \,g^{n,v}) = \mathcal{N}(x^{n,v} \, | \, f_{\theta}(\hat{T}(g^{n,v}) \, z^n), \, \sigma_x^2)
\end{equation}

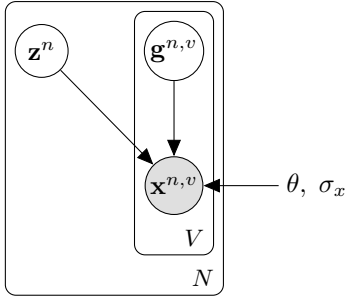
\begin{figure}[t!]
  \begin{center}
      % model_pca.tex
%
% Copyright (C) 2012 Jaakko Luttinen
%
% This file may be distributed and/or modified
%
% 1. under the LaTeX Project Public License and/or
% 2. under the GNU General Public License.
%
% See the files LICENSE_LPPL and LICENSE_GPL for more details.

% PCA model

%\beginpgfgraphicnamed{model-pca}
\begin{tikzpicture} %[scale=0.8, every node/.style={scale=0.8}]

  \node[latent] (z) {$\mathbf{z}^n$} ;
  \node[latent, right=of z]  (g) {$\mathbf{g}^{n,v}$} ;
  \node[obs, below=of g] (x) {$\mathbf{x}^{n,v}$} ;
  \node[const, right=of x] (t) {$\; \theta, \; \sigma_x$};

  %\node[const, above=of s, yshift = -0.7cm] (ss) {$\sigma_s$};
  %\node[const, above=of pu, yshift= -0.7cm] (su) {$\bar{\sigma}$};
  %\node[const, above=of pw, yshift= -0.7cm] (k) {$\kappa$};
  %\node[const, above=of y, yshift=-0.7cm] (sn) {$\sigma_n$};

  %\node[const, above=of pu, xshift=0.9cm, yshift=-0.7cm] (a) {$\mathbf{a}$};
  % Define nodes
%  \node[obs]                               (x) {$\mathbf{x}$};
%  \node[obs, right=of x]                   (y) {$\mathbf{y}$};
%  \node[latent, above=of y] (pw) {$\mathring{\varphi}$};
%  \node[latent, above=of pw] (pu) {$\bar{\varphi}$};
%  \node[const, below=of y, xshift=0.0cm, yshift=-0.0cm] (W) {$\mathbf{W}$};
%  \node[latent, above=of pu] (s) {$s$} ;
%  \node[const, left=of pu, xshift=-1.5cm] (a) {$a$} ;
  
%  \node[const, right=of y] (sn) {$\sigma_n$} ;
%  \node[const, right=of pw] (k) {$\kappa$} ;
%  \node[const, right=of pu] (su) {$\bar{\sigma}$} ;
%  \node[const, right=of s] (ss) {$\sigma_s$} ;

  % Connect the nodes
  \edge {z, g, t} {x};
  %edge {t} {pw};
  %\edge {pw, x, W, sn} {y};

  %\edge {ss} {s}
%  \edge {x, pw, W, sn} {y} ; %
%  \edge {pu, k} {pw} ;
%  \edge {s, su, a} {pu} ;
%  \edge {ss} {s} ;

  % Plates
  \plate {views} {(g)(x)} {$V$} ;
  \plate {instances} {(z)(g)(x)(views.south east)(views.north east)} {$N$} ;
%(ux.north west)(ux.south west)} {$M$} ;

\end{tikzpicture}
%\endpgfgraphicnamed

%%% Local Variables: 
%%% mode: tex-pdf
%%% TeX-master: "example"
%%% End: 
  \end{center}
  \caption{Graphical model description of the latent linear $\textup{SO}(3)$ representation.}
  \label{fig:graphical_model}
\end{figure}

We use a standard normal prior on $z$, and a uniform distribution over $\textup{SO}(3)$ for $g$.
The complete graphical model is shown in figure~\ref{fig:graphical_model}.
For regularization, we use a zero-mean Gaussian prior on the neural network weights and on $\ln \sigma_x$ (with precision $\beta$ and $\alpha$, respectively).
The complete log joint probability for a single instance is then given by:
\begin{equation}
\begin{aligned}
\ln p(X^n, G^n, z^n, \theta, \sigma_x)
&=
&-& \sum_{v=1}^V \frac{\|x^{n,v} - f_{\theta}(\hat{T}(g^{n,v}) z^n)\|^2}
                {2 \sigma_x^2}
- \frac{V D_x}{2} \ln \sigma_x \\
&&-&\frac{\|z^n\|^2}{2} - \beta \frac{ \|\theta\|^2}{2} - \alpha \frac{(\ln \sigma_x)^2}{2}
\end{aligned}
\end{equation}
The gradients of which are easily computed using backpropagation (in our own implementation, we compute gradients automatically using Theano~\citep{Bergstra2010}).

\subsection{Representation Theory of $\textup{SO}(3)$}
\label{sec:representation_theory_of_so3}

In order to compute the transformed latent scene $z^{n,v} = \hat{T}(g^{n,v}) z^n$, we must understand the structure of the unitary representation $\hat{T}$, and find out how to compute it.
Since any representation is equivalent to a block diagonal one, we take $\hat{T}$ to be block-diagonal:
\begin{equation}
  \hat{T}(g) =
    \begin{bmatrix}
    \hat{T}^{l_1}(g) &        & \\
                    & \ddots & \\
                    &        & \hat{T}^{l_{N}}(g)
  \end{bmatrix},
\end{equation}
where $\hat{T}^l$ is the matrix of an irreducible representation of index $l$.

The complete set of irreducible unitary representations of $\textup{SO}(3)$ (the \emph{unitary dual}) can be obtained by decomposing what is called the \emph{regular representation} of $\textup{SO}(3)$ acting on functions on the sphere $\mathbb{S}_2$.
In this case, the representation space is the Hilbert space $\mathcal{H}$ of square-integrable functions on the sphere, and the representation is defined as $T(g) x(p) = x(g^{-1} p)$ for $g \in \textup{SO}(3), \, x \in \mathcal{H}, \, p \in \mathbb{S}_2$.
A function  $x \in \mathcal{H}$ can be decomposed as a sum of so-called real spherical harmonic functions $Y_{lm} : \mathbb{S}_2 \rightarrow \mathbb{R}$ (for $l \geq 0, |m| \leq l$).
That is, for any $x \in \mathcal{H}$ we can write
\begin{equation}
  x(p)=\sum_{l\geq 0}\sum_{m=-l}^l c_{lm} Y_{lm}(p),
\end{equation}
for some $x$-dependent coefficients $c_{lm}$ (comparable to ordinary Fourier coefficients of a periodic function on the line).
The matrix elements of the representation $T$ in this basis are given by
\begin{equation}
  \hat{T}^l_{mn}(g) = \langle Y_{lm}, T(g) Y_{ln} \rangle = \int_{\mathbb{S}_2} Y_{lm}(p) [T(g) Y_{ln}](p) dp = \int_{\mathbb{S}_2} Y_{lm}(p) Y_{ln}(g^{-1} p) dp.
\end{equation}
It can be shown \citep{Sugiura1976} that this representation $\hat{T}^l(g)$, which maps coefficients $(c_{l,-l}, \ldots, c_{l,l})$ to coefficients $(c'_{l,-l}, \ldots, c'_{l,l})$ corresponding to the expansion of the rotated function $x'(p) = x(g^{-1} p)$, is irreducible.
Furthermore, all irreducible unitary representations of $\textup{SO}(3)$ are equivalent to some $\hat{T}^l$ obtained in this way.

To get an intuitive understanding of the transformation properties of the spherical harmonics, consider figure \ref{fig:sh}:
the basis functions on each row (corresponding to one value for $l$) can be linearly combined, and any rotation of the resulting function can again be expressed as a linear combination of only those basis functions.
That is, each row corresponds to a representation.

For the experiments detailed in section~\ref{sec:experiments}, we will be interested in the action of $\textup{SO}(3)$ on 3D objects, which could be represented as functions of some compact region of 3D space.
Such a function can be decomposed by using multiple copies of each irreducible representation, as was done in~\citet{Skibbe2009} in the context of rotation invariant shape descriptors.

\subsection{Computation of the Representation Matrices}
\label{sec:computation_of_the_representation_matrices}

We now turn to the computation of the transformation $x \rightarrow \hat{T}(g) x$.
Due to the block-structure of $\hat{T}$, this computation breaks up into a large number of relatively small matrix multiplies.
The matrix elements $\hat{T}^l_{mn}$ of the irreducible representations of $\textup{SO}(3)$ are known as the Wigner D-functions.
The formulae given for these matrix elements by~\citet{Wigner1960} involve numerically unstable sums of many elements with large coefficients.
Quite surprisingly, given the prominence of these matrices in physical theories and their long history, a relatively recent paper introduced a novel and very fast method for computing the representation matrices in the basis of real spherical harmonics~\citep{Pinchon2007}.
The authors of this paper show that in the basis of real spherical harmonics, a rotation specified by ZYZ-Euler angles $g = (g_1, g_2, g_3)^T$ can be computed as $\hat{T}^l(g) = \hat{T}^l_z(g_3) J^l \hat{T}^l_z(g_2) J^l \hat{T}^l_z(g_1)$, where $J^l$ is a precomputed symmetric orthogonal block matrix that exchanges the Y and Z axes, and $T^l_z$ represents a $z$-axis rotation, which takes the simple form:
\begin{equation}
\hat{T}_z^l(\alpha) = 
\begin{bmatrix}
\cos(l \alpha)  &                    &         & &         &                      & \sin(l \alpha) \\
                & \cos((l-1)\alpha)  &         & &         & \sin((l - 1) \alpha) & \\ 
                &                    & \ddots  & & \iddots &                      & \\
                &                    &         &1&         &                      & \\
                &                    & \iddots & & \ddots  &                      & \\
                & -\sin((l-1)\alpha) &         & &         & \cos((l-1) \alpha)   & \\
-\sin(l \alpha) &                    &         & &         &                      & \cos(l \alpha) \\
\end{bmatrix}
\end{equation}
Figure~\ref{fig:dmats} shows the matrix $\hat{T}^{35}$ corresponding to weight $l=35$ for three values of $g$.

\begin{figure}
        \centering
        \begin{subfigure}{0.3\textwidth}
                \includegraphics[width=\textwidth]{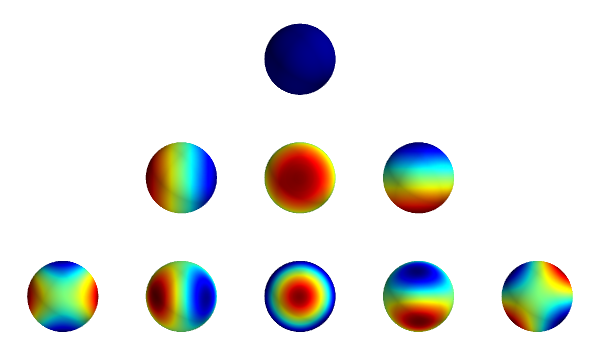}
                \caption{Real Spherical Harmonics}
                \label{fig:sh}
        \end{subfigure}
        \begin{subfigure}{0.5\textwidth}
                \includegraphics[width=\textwidth]{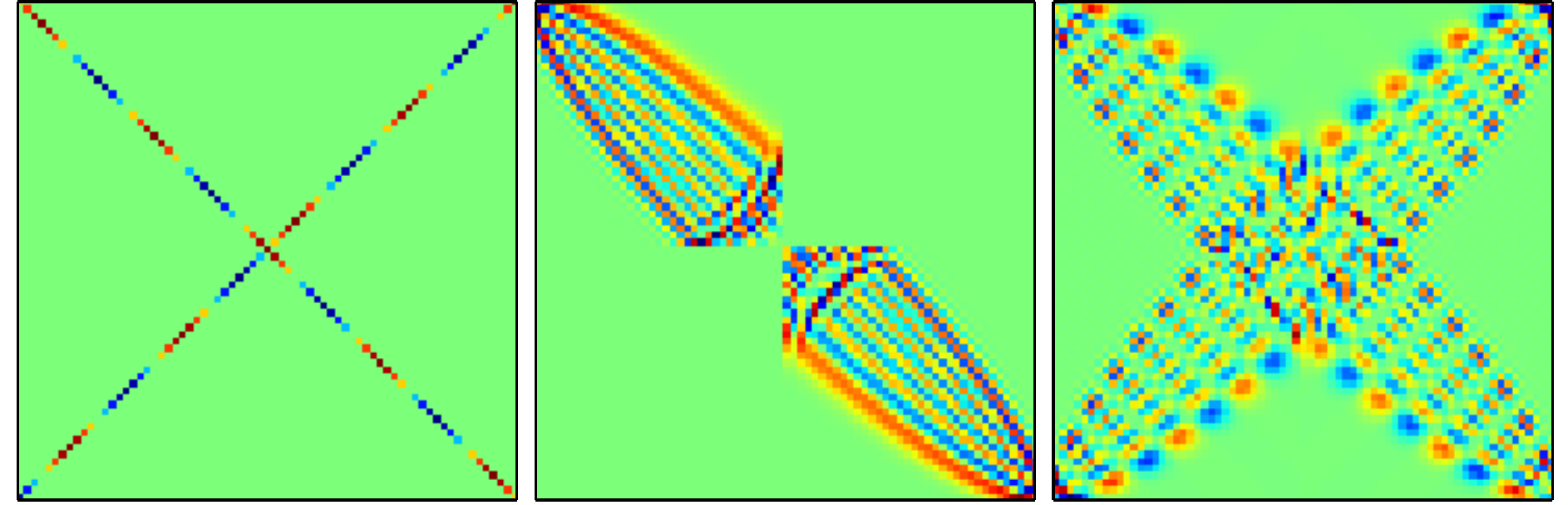}  % }
                \caption{Real Wigner-D matrices for $\mathbf{T}^{35}(\pi/8,0,0) = \mathbf{T}_z^{35}(\pi/8)$, $\mathbf{T}^{35}(0,\pi/8,0)$, and $\mathbf{T}^{35}(\pi/8,\pi/8,\pi/8)$}
                \label{fig:dmats}
        \end{subfigure}%
        \caption{Real spherical harmonics and Wigner D-Matrices}
        \label{fig:dmats_and_sh}
\end{figure}

Naively implemented, this method has computational complexity $O(l^3)$ in dimension $2l + 1$, due to the matrix multiplications.
However, it is possible to \emph{apply} the matrix $\hat{T}^l(g)$ to a vector without explicitly constructing it, using associativity: $x' = \hat{T}(g) x = \hat{T}_z(g_3) (J (\hat{T}_z(g_2) (J (\hat{T}_z(g_1) x))))$.
The sparse multiplication $\hat{T}_z x$ takes linear time, while $J x$ takes quadratic time.
In practice, we use many copies of relatively low-dimensional representations, so the values of $l$ are much smaller than the dimensionality of the latent space, and hence the quadratic complexity is not a concern.

\subsection{Learning}
\label{sec:learning}

We train the model using a stochastic hard EM algorithm, which involves alternating between the following steps:
\begin{enumerate}
\item In hard EM, the E-step consists of partial maximization with respect to $z^n$ and $g^{n,v} \, (v=1,\ldots,V)$ for a single instance $n$ while keeping the parameters $\theta, \sigma_x$ fixed. We initialize the latent variables at the state of the last iteration and perform one step of gradient ascent on
$\ln p(X^{n}, \, G^{n}, \, z^n, \, \theta, \sigma_x)$.
\item The M-step consists of a maximization with respect to the parameters $\theta, \sigma_x$ while holding latent variables fixed.
In our stochastic algorithm, we perform a single gradient step on $\ln p(X^{n}, \, G^{n}, \, z^n, \, \theta, \sigma_x)$. 
\end{enumerate}

We use adagrad for all optimization~\citep{Duchi2011}.

\section{Experiments}
\label{sec:experiments}

We trained the model on the NORB dataset~\citep{LeCun}.
This dataset consists of objects in $5$ generic categories: four-legged animals, human figures, airplanes, trucks, and cars.
Each category contains $10$ instances, of which we used the last $5$ for training.
Each instance is imaged at $9$ camera elevations (30 to 70 degrees from horizontal, in 5 degree increments) and 18 azimuths (0 to 340 degrees in 20 degree increments).
Finally, there are $6$ lighting conditions for each instance, yielding a total of $5 \cdot 5 \cdot 6 \cdot 9 \cdot 18 = 24300$ images.

The data was made zero mean, contrast normalized and then PCA whitened, retaining $95\%$ of the variance.
We used a neural network $f_\theta$ with one hidden layer containing 550 hidden units.
The group representation $\hat{T}$ is determined by a choice of $l_i; i=1,\ldots, L$ which we chose to be: $[0]\times 20 + [1] \times 15 + [2] \times 10 + [3] \times 10 + [4] \times 10 + [5] \times 9 + [6] \times 8 + [7] \times 7 + [8] \times 6 + [9] \times 5$, where the number in brackets represents $l_i$ and the multiplier denotes its multiplicity.
The regularization parameters were set to $\beta=0.1, \alpha=0.1$.

In figure \ref{fig:interpolation}, we show that the model is able to generate reasonable images for angles it has never seen before.
The model is only trained on images that are off by $20$ azimuthal degrees, but the model can produce images off by much smaller angles.

In figure \ref{fig:extrapolation}, we show that the model is able to extrapolate to unseen angles of a known object. That is, we train the model only on the azimuthal angle larger than 40 degrees from the reference figure (i.e. rotation 0), but produce a mean figure from the network at angles $0, 20, 40, 60$.
The model gives reasonable images that retain the object identity for poses in which it has not seen that object before.

\begin{figure}
        \centering
                \includegraphics[width=0.8\textwidth]{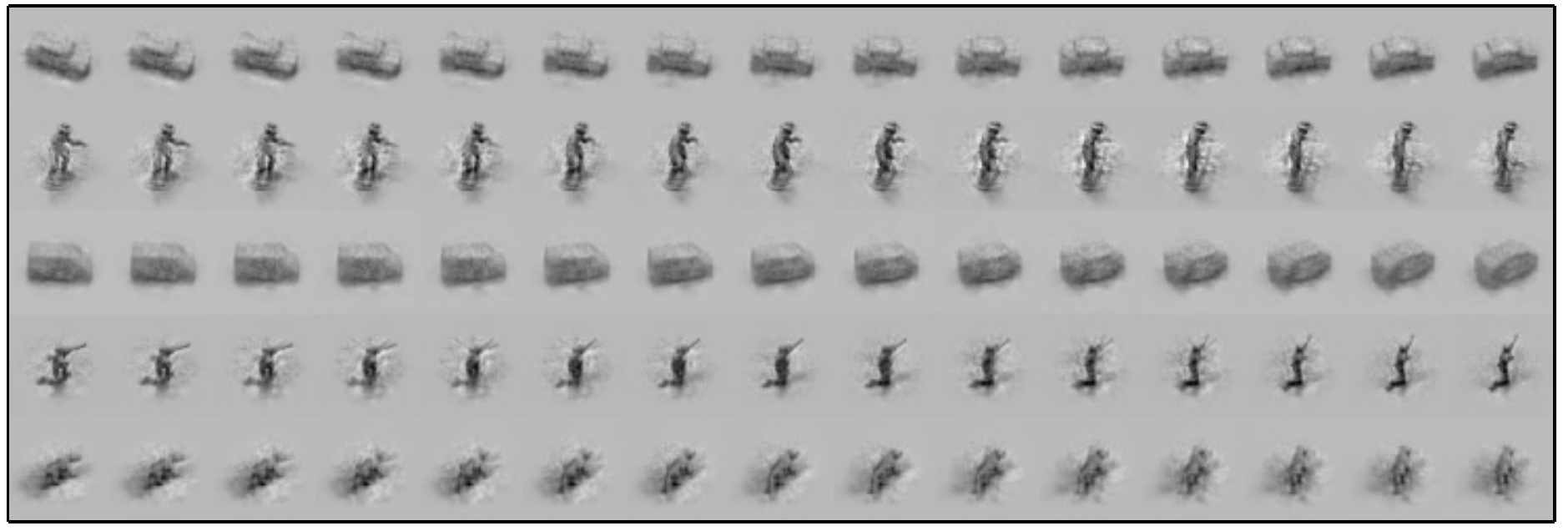}
        \caption{Interpolation over $40$ degrees for various objects.}
        \label{fig:interpolation}
\end{figure}

\begin{figure}
        \centering
                \includegraphics[width=0.7\textwidth]{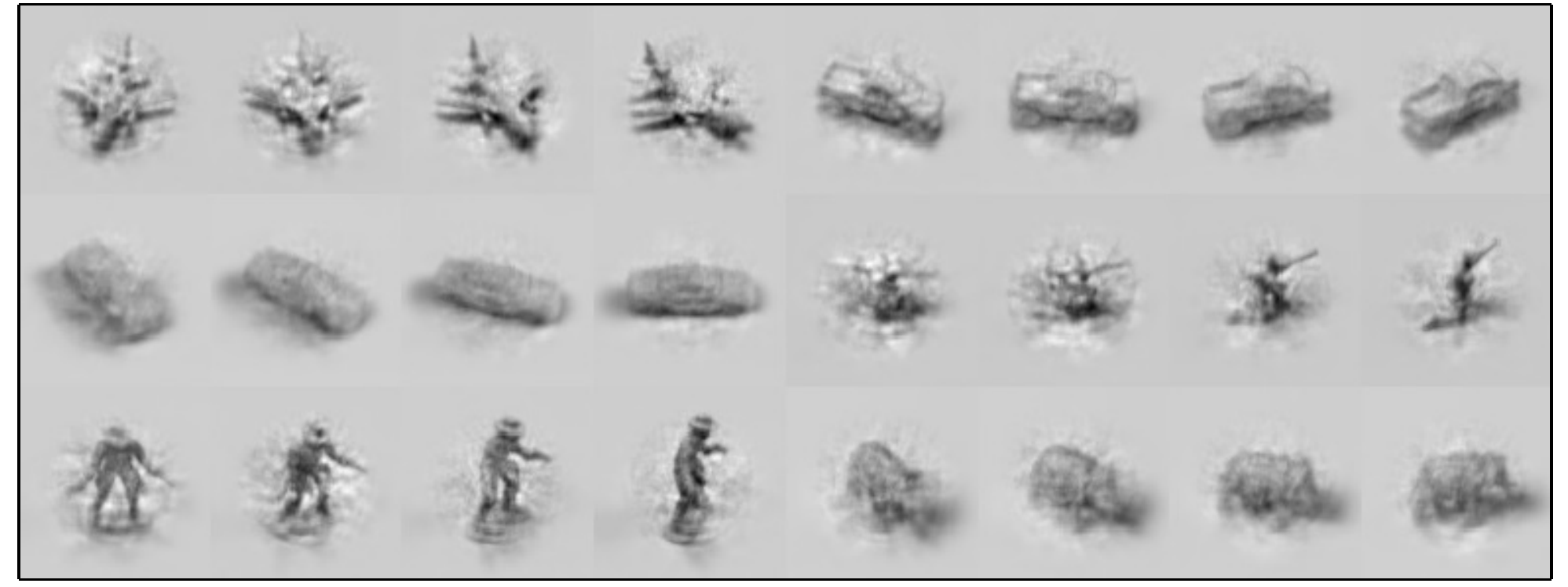}
        \caption{Extrapolation: the first two images in each sequence of 4 were not part of the training set.}
        \label{fig:extrapolation}
\end{figure}

\section{Related Work}
\label{sec:related_work}

Our work is related to the idea of transforming auto-encoders or ``capsules'' by~\citet{Hinton2011}.
A transforming auto-encoder consists of many capsules, each of which learns to recognize a visual entity and predict its pose.
The pose variables $g$ are thus explicitly represented in the model, and act linearly on other pose variables, as is the case in our model.
Unlike our model, a transforming auto-encoder represents the scene content as a set of probabilities, each of which indicates the likelihood of the preferred visual entity being present.

The binary recognition unit used by a capsule for object $z$ corresponds to an orbit $O(z) = \{\hat{T}(g) z \; | \; g \in G \}$ in our model.
Having a single latent space shared by multiple visual entities may aid in generalization, and makes it possible to compute metric relations between different objects.
Our approach should also deal better with (approximately) symmetric objects, for which it is not possible to unambiguously estimate pose and motion (what is the pose of a circle?).
For the case of translational motion of an edge-like structure, this is known as the aperture problem~\citep{Memisevic2012}.
Instead of trying to estimate the motion anyway, our model would represent the edge as a vector whose orbit has reduced dimensionality compared to non-symmetric objects.

That said, the fully connected generative network and hard-EM algorithm used in our current model are not suitable for dealing with large images, and so we consider the current model as only a proof of concept.
A more scalable linear representation learning system could be based on a group-invariant convolutional network (e.g.~\citet{Gensa, Mallat2012}) that generates a distributed representation that at each point locally describes the scene content, while transforming in a (locally) covariant manner.

\pagebreak
\section{Conclusion}
\label{sec:conclusion}

As the problem of object recognition in static images is steadily approaching the ``solved'' status, we should start looking towards the next frontier.
One of the central challenges is to move away from the idea that images are i.i.d. draws from an underlying distribution (as is the case only in current day benchmark datasets), and begin to model the dynamics of the visual world.
Another challenge is to generalize effectively from few examples, which necessitates the exploitation of symmetries of the data distribution.
Both of these problems require us to take a closer look at the transformation properties of learned visual representations.

In this paper, we have theoretically studied the consequences of assuming a linear representation of a symmetry group in the observed or latent representation space.
We have shown that the entire class of such models can be understood mathematically (they are all direct sums of irreducible representations), and have shown how the theory specializes for the case of the 3D rotation group.
Furthermore, we have shown that under uniform sampling of orbits, the geometrical objective of learning a linear, unitary and irreducible representation leads to decorrelated representations, thereby shedding new light on this common learning objective.

\subsubsection*{Acknowledgments}
This work was supported by NWO, grant number NAI.14.108.

\bibliography{LatentGroupReps_ICLR}
\bibliographystyle{iclr2015}

\section{Appendix}

\subsection{Proof of Theorem \ref{thm:diagonal_covariance}}

We have $\hat{x} = \hat{T}(g) \hat{\tau}$, and $g$ distributed uniformly.
Let $\mu(g)$ denote the normalized Haar measure on $G$.

\begin{equation}
  \begin{aligned}
    \mathbb{E}_{p(g)}\left[\hat{x}^l_{m} \hat{x}^{l'}_{m'}\right]
    &=
    \int_G \left(\sum_n \hat{T}^l_{mn}(g) \hat{\tau}^l_n\right)
    \cdot
    \left(\sum_{n'} \hat{T}^{l'}_{m'n'}(g) \hat{\tau}^{l'}_{n'}\right) d\mu(g) \\
    &=
    \sum_{nn'} \hat{\tau}^l_n \hat{\tau}^{l'}_{n'} \int_G \hat{T}^l_{mn}(g) \hat{T}^{l'}_{m'n'}(g) d\mu(g) \\
    &=
    \sum_{nn'} \hat{\tau}^l_n \hat{\tau}^{l'}_{n'} \frac{\delta_{ll'} \delta_{mm'} \delta_{nn'}}{\dim \hat{T}^l}\\
    &=
    \sum_{n} \hat{\tau}^l_n \hat{\tau}^{l}_{n} \frac{\delta_{ll'} \delta_{mm'}}{\dim \hat{T}^l} \\
    &=
    \frac{ \|\hat{\tau}^l\|^2}{\dim \hat{T}^l} \, \delta_{ll'} \delta_{mm'} \\
  \end{aligned}
\end{equation}

Where we used the orthogonality of matrix elements of irreducible representations~\citep{Sugiura1976}:
\begin{equation}
  \langle \hat{T}^l_{mn}, \hat{T}^{l'}_{m'n'} \rangle = \frac{\delta_{ll'} \delta_{mm'} \delta_{nn'}}{\dim \hat{T}^l},
\end{equation}
and where $\dim \hat{T}^l$ denotes the dimension of the representation indexed by $l$.

\end{document}